\let\NAT@parse\undefined
\pgfplotsset{compat = newest}
\def\tr{^{\rm T}}
\def\de{\mathrm{d}}
\newcommand{\mc}{\mathcal}
\newtheorem{theorem}{Theorem}
\newtheorem{proposition}[theorem]{Proposition}
\newtheorem{definition}[theorem]{Definition}
\newtheorem{remark}[theorem]{Remark}
\newtheorem{example}[theorem]{Example}
\DeclareMathOperator*{\argmin}{arg\,min~}
\DeclareMathOperator*{\minimize}{minimize~}
\DeclareMathOperator*{\st}{subject\,to~}
\newcommand{\R}{\mathbb{R}}
\title{\LARGE \bf
A Set-Theoretic Approach to Multi-Task Execution and Prioritization
}
\author{Gennaro Notomista$^1$, Siddharth Mayya$^1$, Mario Selvaggio$^2$, Mar\'ia Santos$^1$, and Cristian Secchi$^3$%
\thanks{\copyright~2020 IEEE. Paper accepted for publication at the 2020 IEEE International Conference on Robotics and Automation. Personal use of this material is permitted.  Permission from IEEE must be obtained for all other uses, in any current or future media, including reprinting/republishing this material for advertising or promotional purposes, creating new collective works, for resale or redistribution to servers or lists, or reuse of any copyrighted component of this work in other works.}
\thanks{$^1$G. Notomista, S. Mayya, and M. Santos are with the Institute for Robotics and Intelligent Machines, Georgia Institute of Technology, Atlanta, Georgia, USA {\tt\small \{g.notomista, siddharth.mayya, maria.santos\}@gatech.edu}}%
\thanks{$^2$M. Selvaggio is with the Department of Electrical Engineering and Information Technology, University of Naples Federico II, Naples, Italy {\tt\small mario.selvaggio@unina.it}}
\thanks{$^3$C. Secchi is with the Department of Science and Methods of Engineering, University of Modena and Reggio Emilia, Modena, Italy {\tt\small cristian.secchi@unimore.it}}
}
\begin{document}
\maketitle
\thispagestyle{empty}
\pagestyle{empty}

%%%%%%%%%%%%%%%%%%%%%%%%%%%%%%%%%%%%%%%%%%%%%%%%%%%%%%%%%%%%%%%%%%
\begin{abstract}
Executing multiple tasks concurrently is important in many robotic applications. Moreover, the prioritization of tasks is essential in applications where safety-critical tasks need to precede application-related objectives, in order to protect both the robot from its surroundings and vice versa. Furthermore, the possibility of switching the priority of tasks during their execution gives the robotic system the flexibility of changing its objectives over time. In this paper, we present an optimization-based task execution and prioritization framework that lends itself to the case of time-varying priorities as well as variable number of tasks. We introduce the concept of \textit{extended set-based tasks}, encode them using control barrier functions, and execute them by means of a constrained-optimization problem, which can be efficiently solved in an online fashion. Finally, we show the application of the proposed approach to the case of a redundant robotic manipulator.
\end{abstract}
%%%%%%%%%%%%%%%%%%%%%%%%%%%%%%%%%%%%%%%%%%%%%%%%%%%%%%%%%%%%%%%%%%

\section{INTRODUCTION} \label{sec:intro}

The ability of executing multiple tasks simultaneously constitutes an essential aspect of many robotic systems. Indeed, it becomes necessary in all those cases where, besides the accomplishment of application-related goals, safety requirements have to be fulfilled.
The concept of \textit{redundancy} (see, e.g.,~\cite{siciliano2010robotics}) is what makes the concurrent execution of a set of tasks feasible. If a robot is redundant with respect to a task, there exist multiple configurations of the robot that allow it to achieve that task. Within a set of tasks that are to be executed simultaneously, it is generally desirable to establish a \textit{prioritized stack}. For instance, if tasks are safety-critical---such as avoiding joint limits of robotic manipulators, or obstacles in the case of mobile robots---they must take precedence over the execution of other objectives---e.g., visual servoing or trajectory tracking~\cite{di2018safety}.

In this paper, we present an optimization-based control framework that allows robots to execute and prioritize a set of tasks. Moreover, this formulation lends itself to be applied to the case of time-varying task priorities as well as variable number of tasks to be executed. As a matter of fact, the prioritization order within the stack of tasks need not be unyielding. The importance of some tasks over others may evolve over time to reflect endogenous changes of the system---related, for instance, to changes in the application objective---or exogenous factors---such as human teleoperators in shared-control applications~\cite{selvaggio2019passive}. Due to the fact that the discrete nature of priority switches may induce discontinuities in the robot controller (see, e.g.,~\cite{Keith2011}), the problem of time-varying priorities has been addressed in many multi-task execution frameworks, as will be discussed more in detail in the next section. Owing to its formulation, the approach presented in this paper intrinsically allows task priorities to be swapped, and tasks to be inserted and removed, in a continuous fashion.

By extending the definition of set-based tasks~\cite{moe2016set}, we propose an optimization-based approach that encodes tasks by means of control barrier functions~\cite{ames2019control}. The latter have been introduced in their modern formulation in~\cite{ames2014control} for ensuring safety of dynamical systems, intended as the forward invariance property of a subset of the state space of the system. Additionally, in~\cite{xu2015robustness}, it has been shown that they can be also employed to achieve set stability. In the approach proposed in this paper, we leverage these two properties, and extend them to the time-varying case for input-output dynamical systems, in order to encode a large variety of robotic tasks as \textit{extended set-based} tasks. Moreover, under mild assumptions, we prove the continuity of the robot controller required to execute a variable number of tasks whose priorities change over time. Finally, even though the proposed framework applies to multiple kinds of robotic systems, in this paper we focus on the case of robotic manipulators (see, e.g.,~\cite{ames2013towards} for a similar approach used for humanoid robots, or~\cite{notomista2019optimal} for an example of application to the multi-robot domain).

\section{Literature Review}
\label{sec:relatedWork}

Specifying the behavior of robots in the task space is generally simpler and more intuitive than doing it in the input space. Consequently, building a controller in the task space and mapping the control input into the joint space using the (pseudo-)inverse of the Jacobian is very convenient (see e.g.,~\cite{khatib1987,siciliano1991}). This kind of approach has been extended for dealing with humanoids~\cite{mansard2009} and multi-robot systems~\cite{antonelli2009}. The implementation of the main task usually involves  fewer degree-of-freedom (DoF) than the ones available. Thus, it is possible to exploit the so-called \textit{task redundancy} of the system in order to implement extra, secondary, tasks. This can be done by arranging the secondary tasks into a hierarchical stack of tasks and to exploit the Jacobian null space projection for finding the joint velocities that implement the secondary tasks without interfering with the main task, as shown in~\cite{siciliano1991,antonelli2009}. 

When the hierarchy of tasks to be executed is dynamic, it is important to allow a smooth, stable and efficient transition  strategy between the tasks to be implemented and, consequently, on the controls provided to the robot. A lot of work has been done in order to include this feature in the operational space approach. As shown in~\cite{Keith2011}, for instance, task swapping leads to a discontinuity in the control action. Consequently, effective strategies for smoothing the control action during task swapping have been proposed  in~\cite{Keith2011,Jarquin2013}, but their reactivity is limited. As shown in~\cite{Lee2012} and~\cite{Flacco2015}, it is possible to achieve control continuity by scaling or modifying the tasks to be achieved. In~\cite{Liu2015}, \cite{Dehio2018} and \cite{Dehio2019}, modified projectors are introduced for implementing a soft prioritization that allows to smoothly rearrange the, possibly relaxed, tasks.

In order to handle dynamic task priorities, in \cite{kanoun2011kinematic} a prioritized task regulation framework based on a sequence of quadratic programs has been proposed. Hierarchical quadratic programming has also been exploited in~\cite{escande2014hierarchical,kim2019continuous}. The main advantage of recasting the task regulation into a sequence or quadratic programs is the higher freedom in specifying the tasks with respect to the operational framework. Moreover, the approach affords swapping the tasks in real time at the cost of solving a number of increasingly complex quadratic programs, which can be challenging for real-time execution.
Jacobian-based task execution and prioritization, as well as the hierarchical quadratic programming approach, shares multiple features with the method presented in this paper. In the next section, we show how Jacobian-based tasks can be encoded using our notion of extended set-based tasks. Moreover, the framework we propose encodes tasks using constraints of an optimization-based controller---just like in the hierarchical quadratic programming case---but with the advantage of solving a single optimization problem. This is achieved by leveraging control barrier functions, originally formulated to ensure a safety property through the selection of a proper control input that can be found by solving a simple convex optimization problem \cite{ames2019control}. Control barrier functions have been successfully exploited for controlling the behavior of robots in multiple domains (see, e.g.,~\cite{notomista2018,wang2018,guerrero2020,lindemann2019,talignanilandi2019}), and they naturally allow the execution of multiple tasks~\cite{wang2016multi,glotfelter2018}.

\section{TASK EXECUTION AND PRIORITIZATION}
\label{sec:main}

The objective of this section is to develop the proposed task-execution and task-prioritization framework and to illustrate its ability to synthesise continuous-controllers even when task priorities are time-varying.

Although the proposed formulation can be applied to other types of robotic systems (see, e.g., \cite{notomista2019optimal} or \cite{notomista2018constraint}), in this paper, we will focus on $n$-DoF robotic manipulators. 
For these types of robots, the so-called \textit{Jacobian-based} tasks are generally described by the following expression:
\begin{equation}
\dot \sigma = J(q) \dot q,
\end{equation}
where $q\in\R^n$ is the vector of generalized coordinates representing the state of the robot, $\sigma(q)\in\R^m$ is a function of the robot state, and $J(q) \in \R^{m\times n}$ is its Jacobian. 
In its basic meaning, a Jacobian-based task is said to be accomplished when the value of $\sigma(q)$ is regulated to a desired value $\sigma_0(q)$. % cite

\subsection{Extended Set-Based Tasks}
\label{subsec:esbt}

Set-based tasks have been introduced in \cite{escande2014hierarchical} as tasks characterized by a desired \textit{area of satisfaction}, which corresponds to the task being executed.
Typical tasks which are also examples of set-based tasks consist in avoidance of joint limits or obstacles, in which the areas of satisfaction are represented by the intervals of allowed joint angles and the collision-free space, respectively. 
In this definition of tasks, we notice an analogy with the concept of \textit{forward invariance} (also referred to as \textit{safety}) in the dynamical system literature \cite{khalil2015nonlinear}. 
A set-based task can be seen as keeping a desired set forward invariant, i.e., ensuring that the state of the robot never leaves the set. 
In the following definition, we propose an extension of this concept of set-based tasks in order to include the possibility of executing a task by going towards a set as well---which, in the dynamical system literature, corresponds to the well-known concept of set \textit{stability}.

\begin{definition}[Extended Set-Based Task]\label{def:esbt}
An \textit{extended set-based task} is a task characterized by a set $\mc C \subset \mc T$, where $\mc T$ is the task space, which can be expressed as the zero superlevel set of a continuously differentiable function $h \colon \mc T \times \R_{\ge0} \to \R$ as follows:
\begin{equation}
\label{eq:defSafeset}
\mc C = \{ \sigma \in \mc T \colon h(\sigma,t)\ge0 \}.
\end{equation}
The goal is rendering the time-varying set forward invariant and asymptotically stable.
\end{definition}
The following example ties the concept of extended set-based tasks to the one of Jacobian-based tasks.

\begin{example}[Generalization of Jacobian-based tasks---Part I]
\label{exmp:partI}

Consider the Jacobian-based task encoded by the following differential kinematic equation:
\begin{equation}
\label{eq:exampleJbt}
\dot \sigma_0 = J_0(q) \dot q,
\end{equation}
where $\dot \sigma_0$ is a desired function we want the robot to track. 
Let the function $h$ used in Definition~\ref{def:esbt} be defined as
\begin{equation}
\label{eq:exampleH}
h(\sigma_0,t) = -\frac{1}{2}\|\sigma-\sigma_0(t)\|^2,
\end{equation}
where the function $\sigma_0(t)$ is given by
%
%\begin{equation}
$\sigma_0(t) = \int_{0}^{t} \dot\sigma_0(\tau)\de \tau$.
%\end{equation}
%
Assuming that the $\dot \sigma_0$ specified in \eqref{eq:exampleJbt} is a continuous function of time, $h$ is continuously differentiable with respect to both its arguments, as requested by the Definition~\ref{def:esbt}.

We notice that the set $\mc C$, zero superlevel of $h$, is given by $\{ \sigma\in\mc T\colon\sigma=\sigma_0(t) \}$. Therefore, rendering $\mc C$ asymptotically stable and forward invariant corresponds to the original Jacobian-based task being accomplished. In Example~\ref{exmp:partII} in Section~\ref{subsec:execution}, a method is proposed for the execution of this task using control barrier functions.
\end{example}

\subsection{Extended Set-Based Task Execution}
\label{subsec:execution}

In this section, we propose a framework to execute the extended set-based tasks defined above which will ultimately allow for time-varying prioritization among tasks in Section \ref{subsec:prioritization}.
Throughout this paper, we will suppose we can model the robot with a control affine dynamical system. 
Moreover, we will assume we have access to an output variable, $\sigma\in\mc T$, which represents the task variable. 
Therefore, the model of the robot is given by
\begin{equation} \label{eq:ca-dyn}%\small
\begin{cases}
\dot x = f(x) + g(x)u\\
\sigma = k(x),
\end{cases}
\end{equation}
where $x\in\mc X\subseteq\mathbb{R}^n$, $u \in \mc U \subseteq \mathbb{R}^p$, $\sigma\in\mc T$, $f$ and $g$ are Lipschitz-continuous vector fields, and $k\colon\R^n\to\mc T$ is a smooth map representing the task forward kinematics.

The model~\eqref{eq:ca-dyn} encompasses both dynamic and kinematic models of robotic manipulators. 
In case the nonlinear second-order dynamic model of a robot is considered (see, e.g., \cite{siciliano2010robotics}),
by defining the joint angles and velocities as the state $x=[x_1,x_2]\tr=[q,\dot q]\tr$ and the joint torques as the input $u$, 
the system can be brought to control affine form.
If, instead, a velocity-resolved robot control scheme is considered~\cite{siciliano2010robotics}, then the system follows the single-integrator dynamics $\dot x = u$---which is control affine too---, where the state $x$ is the vector of joint coordinates and the input $u$ is the vector of joint velocities. 
Without loss of generality, in the following we will focus on the kinematic model of robotic manipulator, assuming that we can control its joint velocities.

Based on a previously developed method for executing tasks~\cite{notomista2018constraint}, we now present an optimization-based framework required to execute extended set-based tasks. 
As described in Section~\ref{sec:intro}, we consider tasks whose execution can be encoded as the minimization of a continuously-differentiable cost function $C \colon \mc T \times \mathbb{R}_{\geq 0} \to \mathbb{R}$.
This can be written as the following optimization problem:
\begin{equation} \label{eq:taskmin}
\begin{aligned}
\minimize_u &C(\sigma,t)\\
\st & \dot x = f(x) + g(x)u\\
& \sigma = k(x).
\end{aligned}
\end{equation}
Following the approach proposed in~\cite{notomista2018constraint} and~\cite{notomista2019optimal}, this problem can be reformulated as a constraint-based optimization problem where the robot minimizes its control input subject to a constraint which enforces the execution of the task itself. 
These constraints are enforced using Control Barrier Functions (CBFs)~(see \cite{ames2019control} for a comprehensive overview on the subject).  CBFs perfectly lend themselves to encode extended set-based tasks as they are able to ensure both set safety and set stability. These properties will be recalled in the following for the notion of CBFs extended to encompass constraints on the output of a dynamical system.

\begin{definition}[Output Time-Varying Control Barrier Function, based on~\cite{ames2019control} and~\cite{notomista2019persistification}]
\label{def:cbf}
Let $\mc C \subset \mc D \subset \mc T$ be the superlevel set of a continuously differentiable function $h: \mc D \times \R_{\ge0} \to \R$, contained in a domain $\mc D$. 
Then, $h$ is an output time-varying control barrier function---in the following referred to  simply as CBF---if there exists a Lipschitz continuous extended class $\mc K_\infty$ function (\cite{ames2019control}) $\gamma$ such that for the control system~\eqref{eq:ca-dyn}, for all $\sigma \in \mc D$,
\begin{align}
\label{eq:cbfDefinition}
\sup_{u \in \mc U}  \left\{ \frac{\partial h}{\partial t} + \frac{\partial h}{\partial \sigma} \frac{\partial \sigma}{\partial x} f(x) + \frac{\partial h}{\partial \sigma} \frac{\partial \sigma}{\partial x} g(x) u \right\} \geq - \gamma(h(\sigma)).\hspace{0.25cm}
\end{align}
\end{definition}

With this definition of CBF, we can now state the main theorem whose results will be used in the following sections to derive the multi-task execution and prioritization framework.

\begin{theorem}[based on \cite{ames2019control} and \cite{notomista2019persistification}]
\label{thm:cbf}
Let $\mc C \subset \mc T$ be a set defined as the superlevel set of a continuously differentiable function $h: \mc D \times \R_{\ge0} \to \R$.  
If $h$ is a CBF on $\mc D$ according to Definition~\ref{def:cbf}, then any Lipschitz continuous controller
$u(x,t)\in\mc V(x,t)$, where $\mc V(x,t) = \{ u(x,t)\colon \frac{\partial h}{\partial t} + \frac{\partial h}{\partial \sigma} \frac{\partial \sigma}{\partial x} f(x) + \frac{\partial h}{\partial \sigma} \frac{\partial \sigma}{\partial x} g(x) u + \gamma(h(\sigma)) \ge0 \}$, for the system~\eqref{eq:ca-dyn}, renders the set $\mc C$ forward invariant.  
Additionally, the set $\mc C$ is asymptotically stable in $\mc D$.
\end{theorem}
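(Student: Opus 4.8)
The plan is to reduce both conclusions to the scalar comparison lemma applied to the signal $t\mapsto h(\sigma(t),t)$ along a closed-loop trajectory. First I would fix a Lipschitz continuous controller $u(x,t)\in\mc V(x,t)$ and observe that, since $f$ and $g$ are Lipschitz, $k$ is smooth, and $u$ is Lipschitz, the closed-loop vector field $x\mapsto f(x)+g(x)u(x,t)$ is locally Lipschitz in $x$ and continuous in $t$; hence \eqref{eq:ca-dyn} admits a unique absolutely continuous solution $x(\cdot)$ through any initial state, with $\sigma(t)=k(x(t))$ well defined on the corresponding maximal interval.

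Next I would introduce $H(t):=h(\sigma(t),t)$. Since $h$ is continuously differentiable and $x(\cdot)$ is absolutely continuous, $H$ is absolutely continuous, and for almost every $t$ at which $\sigma(t)\in\mc D$ the chain rule yields
\begin{equation}
\dot H(t)=\frac{\partial h}{\partial t}+\frac{\partial h}{\partial \sigma}\frac{\partial \sigma}{\partial x}\bigl(f(x(t))+g(x(t))u(x(t),t)\bigr)\ge -\gamma\bigl(H(t)\bigr),
\end{equation}
where the inequality is exactly the defining property of the set $\mc V(x,t)$. Let $\mathfrak y$ solve the scalar problem $\dot{\mathfrak y}=-\gamma(\mathfrak y)$, $\mathfrak y(0)=H(0)$; because $\gamma$ is a locally Lipschitz extended class $\mc K_\infty$ function this solution is unique, and $\gamma(0)=0$ makes the origin an equilibrium. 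The comparison lemma (see, e.g., \cite{khalil2015nonlinear}) then gives $H(t)\ge\mathfrak y(t)$ for as long as $\sigma(t)\in\mc D$. If $\sigma(0)\in\mc C$, then $H(0)\ge0$, so $\mathfrak y(t)\ge0$ for all $t$, hence $H(t)\ge0$; since $\mc C\subset\mc D$ and $\sigma(t)$ can only leave $\mc C$ by first making $H$ negative, this proves forward invariance of $\mc C$.

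For the asymptotic-stability claim I would run the same estimate from an initial condition $\sigma(0)\in\mc D$ with $H(0)<0$. Now $\dot{\mathfrak y}=-\gamma(\mathfrak y)>0$ while $\mathfrak y<0$, so $\mathfrak y$ is strictly increasing and, being unable to cross the equilibrium at $0$, satisfies $\mathfrak y(t)\uparrow 0$ as $t\to\infty$. Therefore $\liminf_{t\to\infty}H(t)\ge0$; together with the forward-invariance step (restarted at the first time $H$ reaches $0$, if any) and the continuity of $h$, this shows that trajectories remaining in $\mc D$ satisfy $\mathrm{dist}(\sigma(t),\mc C)\to0$, i.e., $\mc C$ is asymptotically stable in $\mc D$.

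The analytic content here---existence and uniqueness of closed-loop solutions, the chain-rule differentiation of $H$, and the comparison lemma---is routine. The point that needs care, and which I expect to be the only real obstacle, is the bookkeeping between $\mc C$ and the larger domain $\mc D$: the differential inequality, and hence the comparison estimate, hold only while $\sigma(t)\in\mc D$, so forward invariance of $\mc C$ must be phrased through the fact that $\mc C$ is the $0$-superlevel set of $h$ nested inside $\mc D$, and the attractivity statement must be localized to trajectories that do not escape $\mc D$---which is precisely why the theorem asserts asymptotic stability \emph{in} $\mc D$ rather than globally.
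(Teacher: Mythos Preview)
The paper does not supply its own proof of this theorem: it is stated with the attribution ``based on \cite{ames2019control} and \cite{notomista2019persistification}'' and is immediately followed by Fig.~\ref{fig:cbf-depict} and discussion, with no proof environment. Your comparison-lemma argument is precisely the standard route taken in those references, so in that sense you are reproducing what the paper delegates to the literature rather than diverging from it.

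Your argument is essentially correct. One small point to tighten: asymptotic stability of $\mc C$ in $\mc D$ comprises both attractivity and Lyapunov stability of the set, and you explicitly argue only the former (that $\liminf_{t\to\infty}H(t)\ge0$). The stability half follows from the same comparison bound---since $\mathfrak y$ is nondecreasing on $\{\mathfrak y<0\}$, one has $H(t)\ge\mathfrak y(t)\ge H(0)$ whenever $H(0)<0$, so $|H(t)|$ is controlled by $|H(0)|$; translating this into a bound on $\mathrm{dist}(\sigma(t),\mc C)$ uses that $\mc C$ is the zero superlevel set of $h$ and the regularity of $h$ near $\partial\mc C$. It is worth stating this step, since otherwise you have established only attractivity, not asymptotic stability.
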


\begin{figure}
\centering
\includegraphics[width=0.48\textwidth]{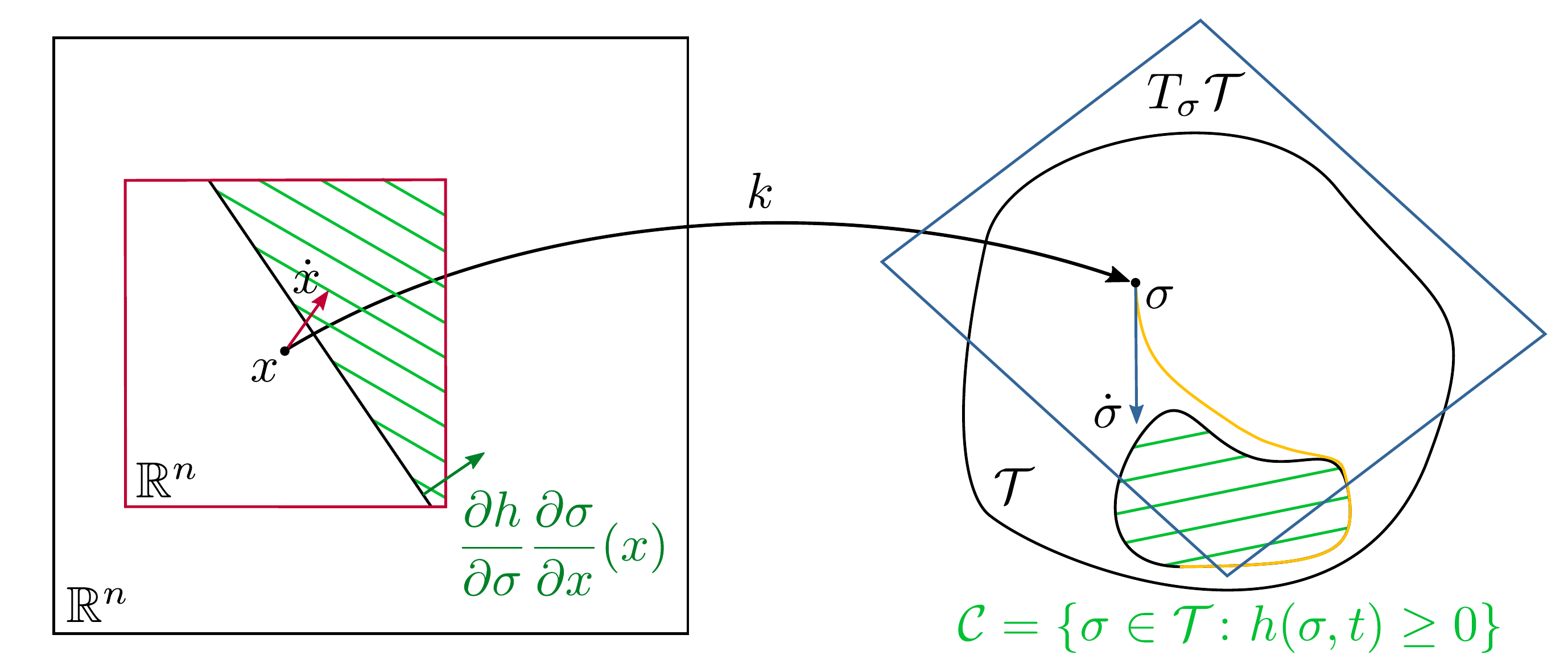}
\caption{Pictorial representation of the differential constraints introduced in \eqref{eq:cbfDefinition}, for a time-invariant CBF. The set $\mc C$ to be rendered forward invariant and asymptotically stable is a subset of the task space $\mc T$. The constraint \eqref{eq:cbfDefinition} can be equivalently written as $\frac{\partial h}{\partial\sigma} \dot\sigma\ge-\gamma(h(\sigma,t))$, which is a affine in $\dot\sigma$, constraining it to lie in a half plane in the tangent space $T_\sigma \mc T$ of the task space. The velocity $\dot\sigma$ is transformed into the velocity $\dot x$ in the tangent space $T_x\R^n\cong\R^n$ of the state space. In this space, the constraint \eqref{eq:cbfDefinition} is affine in $\dot x$, which has to lie in the green hatched half plane.}
\label{fig:cbf-depict}
\end{figure}
In the case of robotic manipulators, Definition~\ref{def:cbf} and Theorem~\ref{thm:cbf} describe how CBFs ensure the forward invariance and the asymptotic stability of a subset of the task space $\mc T$ by enforcing constraints on the joint velocities in the tangent space of the state space. This is depicted in Fig.~\ref{fig:cbf-depict}, where the set to be rendered asymptotically stable is marked by a green hatching. The velocity $\dot\sigma$ is transformed to the velocity $\dot x$ by means of the Jacobian $J(q)$ of the transformation $k$. Then, the inequality in \eqref{eq:cbfDefinition} corresponds to constraining the vector $\dot x$ to lie in the half plane of $\R^n$ hatched in green.

Proceeding similarly to \cite{notomista2018constraint}, it can be shown how a control input $u(t)$ generated using \eqref{eq:taskmin} is equivalent to solving the following constrained optimization problem:
\begin{equation}
\begin{aligned} \label{eqn_const_opt} 
\minimize_{u} &\|u\|^2 \\
\st & \frac{\partial h}{\partial t} + \frac{\partial h}{\partial \sigma} \frac{\partial \sigma}{\partial x} f(x) + \frac{\partial h}{\partial \sigma} \frac{\partial \sigma}{\partial x} g(x) u \\
&+ \gamma(h(\sigma,t)) \ge0,
\end{aligned}
\end{equation}
where $h(\sigma,t) = -C(\sigma,t)$ is a CBF. See~\cite{notomista2018constraint} for a detailed discussion on this equivalence. \par 

\begin{example}[Generalization of Jacobian-based tasks---Part II]
\label{exmp:partII}

The execution of the task introduced in Example~\ref{exmp:partI} can be done by ensuring the asymptotic stability of the set $\mc C$ defined in~\eqref{eq:defSafeset}. 
This property, in turn, can be achieved by treating $h$ as a CBF. 
As a result, by the definition of the set $\mc C$ in~\eqref{eq:defSafeset} and the function $h$ in~\eqref{eq:exampleH}, ensuring the asymptotic stability of the set $\mc C$ corresponds to the following condition: $\sigma(t) \to \sigma_0(t)$, as $t\to\infty$.

We can now use the optimization-based formulation~\eqref{eqn_const_opt}, in order to synthesize a velocity controller $\dot q$ that is able to render the set $\mc C$ asymptotically stable as well as forward invariant. 
The controller solution of
\begin{equation}
\label{eq:exampleQp}
\begin{aligned}
\minimize_{\dot q} &\|\dot q\|^2\\
\st &(\sigma-\sigma_0(t))\tr J_0(q) \dot q \ge\\ &-\gamma(h(\sigma_0,t))-(\sigma-\sigma_0(t))\tr\dot\sigma_0(t)
\end{aligned}
\end{equation}
guarantees the asymptotic stability of $\mc C$, which is equivalent to the execution of the Jacobian-based task~\eqref{eq:exampleJbt}. Moreover, owing to its convexity, solving the optimization problem~\eqref{eq:exampleQp} can be done very efficiently~\cite{boyd2004convex}. 
Therefore, this approach also lends itself to applications where real-time requirements are prescribed.
\end{example}

In the next section, we demonstrate how the formulation in~\eqref{eqn_const_opt} can be extended to allow executing and prioritizing multiple tasks.

\subsection{Extended Set-Based Multi-Task Prioritization}
\label{subsec:prioritization}

Consider a set of tasks, denoted as $T_1,\ldots,T_M$, that need to be executed and are encoded by cost functions $C_1,\ldots,C_M$, respectively. 
The execution of these tasks can be achieved by solving:
\begin{equation}
\begin{aligned} \label{eqn_const_opt_multiple} 
\minimize_{u} &\|u\|^2 \\
\st & \frac{\partial h_m}{\partial t} + \frac{\partial h_m}{\partial \sigma} \frac{\partial \sigma}{\partial x} f(x) + \frac{\partial h_m}{\partial \sigma} \frac{\partial \sigma}{\partial x} g(x) u \\
&+ \gamma(h_m(\sigma,t)) \ge0\quad\forall m \in \{1,\ldots,M\},
\end{aligned}
\end{equation}
where $h_m(\sigma,t) = -C_m(\sigma,t)$. 
While this formulation provides a convenient way to compose multiple tasks together, the solution to such an optimization problem is not guaranteed to exist due to the incompatibility of conflicting task requirements. 
We now modify this formulation and introduce a slack variable $\delta_m$ corresponding to each task, which denotes the extent to which the constraint corresponding to task $T_m$ can be relaxed. 
Thus, for the set of $M$ tasks $T_1,\ldots,T_M$, \eqref{eqn_const_opt_multiple} is now modified as: 
\begin{equation}
\begin{aligned} \label{eq:udeltaqp} 
\minimize_{u,\delta} &\|u\|^2 + l\|\delta\|^2 \\
\st & \frac{\partial h_m}{\partial t} + \frac{\partial h_m}{\partial \sigma} \frac{\partial \sigma}{\partial x} f(x) + \frac{\partial h_m}{\partial \sigma} \frac{\partial \sigma}{\partial x} g(x) u \\
&+ \gamma(h_m(\sigma,t)) \ge - \delta_{m}\quad\forall m \in \{1,\ldots,M\},
\end{aligned}
\end{equation}
where $\delta = [\delta_1,\ldots,\delta_M]^T$ denotes the slack variables corresponding to each task executed by the robot, and $l~\ge~0$ is a scaling constant.
Within this framework, a natural way to introduce priorities is to enforce relative constraints among the slack variables corresponding to the different tasks that need to be performed. 
For example, if the robot were to perform task $T_m$ with the highest priority (often referred to using the partial order notation $T_m\prec T_n$ $\forall n \in \{1,\ldots,M\}, n\neq m$), the additional constraint $\delta_m \leq \delta_n/\kappa$, $\kappa>1$, $\forall n \in \{1,\ldots,M\}, n\neq m$ in \eqref{eq:udeltaqp} would imply that task $T_m$ is relaxed to a lesser extent---thus performed with a higher priority---than the other tasks. The relative scale between the functions $h_m$ encoding the tasks should be considered while choosing the value of $\kappa$.
In general, such prioritizations among tasks can be encoded through an additional linear constraint $K\delta \geq 0$ to be enforced in the optimization problem~\eqref{eq:udeltaqp}. In the following, we refer to $K$ as the \textit{prioritization matrix}, which encodes the pairwise inequality constraints among the slack variables, thus fully specifying the prioritization stack among the tasks.

\subsection{Examples of Applications} \label{subsec:examples}
We now consider three examples in order to highlight how the proposed framework can effectively enable the prioritization and execution of multiple safety- as well as non-safety-critical tasks. Each scenario presented below considers three tasks, some of which are safety-critical---i.e., they always need to be executed, regardless of the prioritization stack. The three examples share the formulation in \eqref{eq:udeltaqp} with the additional constraint $K\delta\ge0$.

\subsubsection{3 Non-Safety-Critical Prioritized Tasks}

The 3 non-safety-critical tasks are encoded by the CBFs
\begin{equation}
\label{eq:exmpTask}
h_m:\mc T_m\times\R_{\ge0}\to\R, \quad\quad m\in\{1,2,3\},
\end{equation}
where $\mc T_m$ is the task space related to task $m$. The values of the entries of the prioritization matrix $K$ have to be chosen in order to encode the desired priorities between the tasks. For instance, if the stack of tasks is $T_1\prec T_3\prec T_2$, then the rows of $K$ have to encode the constraints $\delta_1\le\delta_3/\kappa$ and $\delta_3\le\delta_2/\kappa$ (as described in Sect.~\ref{subsec:prioritization}), i.e.,
\begin{equation}
K = \begin{bmatrix}
-1&0&1/\kappa\\
0&1/\kappa&-1
\end{bmatrix}.
\end{equation}

\subsubsection{1 Safety-Critical Task and 2 Non-Safety-Critical Prioritized Tasks}
\label{subsec:examples12}

Assume task $T_1$ is safety-critical and tasks $T_2$ and $T_3$ are not, where each $T_m$ is encoded as in \eqref{eq:exmpTask}. In this case, $\delta_1\overset{!}{=}0$ and the prioritization matrix $K$ is given by
\begin{equation}
K(t) = \begin{cases}
\begin{bmatrix}0&-1&1/\kappa\end{bmatrix} &\text{ if }T_2\prec T_3,\\[0.25cm]
\begin{bmatrix}0&1/\kappa&-1\end{bmatrix} &\text{ if }T_2\succ T_3.
\end{cases}
\end{equation}
Notice that, as task $T_1$ is safety-critical, it will always be executed as its corresponding slack variable is set to 0, whereas tasks $T_2$ and $T_3$ will be executed only if their slack variables can be driven to 0. This condition is analogous to the notions of task \textit{independence} and \textit{orthogonality} (\cite{antonelli2009tro}).

\subsubsection{2 Safety-Critical Tasks and 1 Non-Safety-Critical Task}

Now, assume that tasks $T_1$ and $T_2$ are safety-critical and task $T_3$ is not. In this case, there is no need for prioritizing. In fact, by definition of safety-critical tasks, $T_1$ and $T_2$ must be always executed ($\delta_1=\delta_2\overset{!}{=}0$). Task $T_3$ will be executed only if it is compatible with the execution of the first two (see discussion in Sect.~\ref{subsec:examples12}).
Note further that, since the safety-critical tasks $T_1$ and $T_2$ do not have to be prioritized, they could even be combined in a single task using the techniques developed in \cite{wang2016multi} or \cite{glotfelter2018}.

These examples demonstrate how our framework can encode the prioritization of different task stacks, especially when multiple tasks might be safety-critical. As the objective of this paper is to illustrate how such a formulation can also allow for dynamically evolving task prioritizations, the next section considers a time-varying prioritization matrix $K(t)$, and demonstrates how this lends itself to the synthesis of continuous controllers for task switching and task insertion/removal.

\subsection{Time-Varying Priorities} \label{subsec:switching}

As the need of switching priorities between tasks in an online fashion arises in multiple applications (see discussions in Section~\ref{sec:intro} and \ref{sec:relatedWork}), in this section, the application of the control framework proposed in this paper to the case of dynamic priorities is shown. 
The main problem when priorities are switched is that, due to the intrinsically discrete nature of the order among tasks, discontinuities in the controller might arise during the switching transient \cite{Keith2011}.
As discussed in Section~\ref{sec:relatedWork}, several methods have been proposed to mitigate this effect, which are tailored to the specific task execution or prioritization approach they target. \par 
In the following proposition, we give sufficient conditions to ensure the continuity of the robot control input during the priority reordering of a stack of extended set-based tasks executed using the optimization-based control framework presented in this paper.
\begin{proposition}[Continuity of the controller during task priority switching]
\label{prop:switch}
Consider the following optimization problem:
\begin{equation}
\label{eq:qpSwitch}
\begin{aligned}
&[u^*(t), \delta^*(t)] =\\[0.1cm]
&\begin{aligned}
\argmin_{u,\delta} ~&\|u\|^2 + l\|\delta\|^2 \\
\st & \frac{\partial h_m}{\partial t} + \frac{\partial h_m}{\partial \sigma} \frac{\partial \sigma}{\partial x} f(x) + \frac{\partial h_m}{\partial \sigma} \frac{\partial \sigma}{\partial x} g(x) u \\
&+ \gamma(h_m(\sigma,t)) \ge - \delta_{m}\quad\forall m \in \{1,\ldots,M\}\\
&K(t)\delta\ge0,
\end{aligned}
\end{aligned}
\end{equation}
where $l>0$, $h_m=-C_m$, $m=1,\ldots,M$ encode $M$ tasks through the continuously differentiable cost functions $C_m$, and $K\colon\R_{\ge0}\to\R^{N_p\times M}$ is a mapping that, for each time instant $t$, provides a $N_p\times M$ prioritization matrix, $N_p$ being the number of constraints required to characterize the desired task prioritizations\footnote{Notice that $N_p\le M^2$ as any other pairwise constraint would be redundant.}.
If $K$ is Lipschitz continuous in time, then the controller $u^*(t)$, solution of \eqref{eq:qpSwitch}, is Lipschitz continuous in time.
\end{proposition}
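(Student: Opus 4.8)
\noindent\emph{Proof strategy.}
The plan is to recognize \eqref{eq:qpSwitch} as a strictly convex parametric quadratic program (QP) and to invoke the Lipschitz stability of its solution map with respect to the problem data, after checking that all the data depend Lipschitz-continuously on $t$. First I would put \eqref{eq:qpSwitch} in standard form: writing $z=[u\trd,\delta\trd]\trd\in\R^{p+M}$, the objective is $\tfrac12 z\trd H z$ with $H=2\,\diag(I_p,\,lI_M)$, which is positive definite since $l>0$; hence for every $t$ the problem is strictly convex and, being feasible, has a \emph{unique} minimizer $z^*(t)=[u^*(t)\trd,\delta^*(t)\trd]\trd$. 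Feasibility holds for every $t$ because, given any $u$, the $M$ CBF inequalities force $\delta_m\ge a_m(x,t,u)$ for affine $a_m$, and under the mild assumption that the polyhedral cone $\{\delta:K(t)\delta\ge0\}$ contains a strictly positive vector (as is the case for all the prioritization matrices in Section~\ref{subsec:examples}) one may scale such a vector up to dominate $a(x,t,u)$ componentwise. Collecting the $2M+N_p$ constraints as $G(t)z\le w(t)$, the CBF rows involve $\frac{\partial h_m}{\partial t}$, $\frac{\partial h_m}{\partial\sigma}\frac{\partial\sigma}{\partial x}f(x)$, $\frac{\partial h_m}{\partial\sigma}\frac{\partial\sigma}{\partial x}g(x)$ and $\gamma(h_m(\sigma,t))$, and the remaining $N_p$ rows are $-K(t)\delta\le0$.

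The argument then rests on two ingredients. (i) The map $t\mapsto(G(t),w(t))$ is locally Lipschitz: the $K$-block is Lipschitz by hypothesis, while the CBF block is a composition of the $C^1$ functions $h_m$, the Lipschitz function $\gamma$, and the smooth/Lipschitz vector fields $f,g$ and map $k$ with the state arc $t\mapsto x(t)$, which is Lipschitz on compact intervals (here I would treat $x(\cdot)$ as a given Lipschitz trajectory, or, to avoid circularity in closed loop, bootstrap from the fact that the QP minimizer remains bounded on compact time intervals, so $\dot x$ is bounded), everything restricted to a compact set of states. (ii) For a strictly convex QP the solution map $(G,w)\mapsto z^*$ is locally Lipschitz provided a constraint qualification holds at the optimum; here Slater's condition holds for every $t$ because the feasible polyhedron is full-dimensional (the CBF/slack constraints can be made strict by enlarging $\delta$, and the cone $\{K(t)\delta\ge0\}$ has nonempty interior by the mild assumption above). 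Composing (i) and (ii) shows $t\mapsto z^*(t)$ is locally Lipschitz, hence Lipschitz on compact intervals, and taking the first block yields the claim for $u^*(t)$.

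I expect the main obstacle to be ingredient (ii): the quantitative stability of the QP solution under perturbations of the \emph{constraint matrix} $G(t)$, not merely of the right-hand side $w(t)$ (for which polyhedral perturbation results give Lipschitz dependence outright). A clean route is the active-set argument: for a strictly convex QP satisfying LICQ, the optimal active set is locally constant in the parameter, so on each such subinterval $z^*(t)$ solves the KKT equations restricted to the active rows---a linear system with coefficients Lipschitz in $t$---and is therefore Lipschitz there; one then glues across the finitely many parameter values where the active set changes, using continuity of $z^*(\cdot)$ to match the pieces together with a uniform bound on the local Lipschitz moduli, thereby producing a single Lipschitz constant on any compact interval. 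Verifying this gluing (equivalently, invoking an off-the-shelf Lipschitz-stability theorem for parametric convex QPs and checking its hypotheses in our setting) is the technical heart of the proof; the key point is that the discontinuity phenomenon of~\cite{Keith2011} is precisely what a Lipschitz---rather than piecewise-constant---choice of the prioritization matrix $K(t)$ rules out.
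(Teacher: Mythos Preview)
Your strategy coincides with the paper's: establish feasibility, note that $l>0$ makes the objective strictly convex, observe that all constraint data are Lipschitz in $t$, and then appeal to Lipschitz stability of the minimizer of a parametric strictly convex QP. The paper dispatches the last step in one line by citing Theorem~1 of~\cite{morris2013sufficient}, so your active-set sketch, the Slater/LICQ verification, and the interior-of-cone assumption on $\{K(t)\delta\ge0\}$ are extra machinery not invoked there; the closed-loop circularity concern you raise about $x(t)$ is likewise not addressed in the paper's proof.
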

\begin{proof}
As discussed in Section~\ref{subsec:execution} and in~\cite{notomista2019optimal}, thanks to the presence of the slack variables $\delta_m$ and the absence of additional constraints on $u$, the optimization problem~\eqref{eq:qpSwitch} is always feasible.
Moreover, by the assumption of $l>0$, the objective function is strictly convex. Furthermore, by assumption, all constraints in~\eqref{eq:qpSwitch} are Lipschitz continuous with respect to time.
Then, Theorem~1 in~\cite{morris2013sufficient} ensures that the solution of~\eqref{eq:qpSwitch} is Lipschitz continuous.
\end{proof}

A limiting case of priority switching consists in the insertion or the removal of a task from a given stack of tasks. 
Using the optimization-based control framework proposed in this paper, the following proposition gives sufficient conditions required in order to perform such a maneuver while ensuring the continuity of the velocity controller of the robot.
\begin{proposition}[Continuity of the controller during task insertion]
\label{prop:insRm}
Consider the following optimization problem:
\begin{equation}
\label{eq:qpInsRm}
\begin{aligned}
&[u^*(t), \delta^*(t)] =\\[0.1cm]
&\begin{aligned}
\argmin_{u,\delta} ~&\|u\|^2 + l\|\delta\|^2\\
\st & \frac{\partial h_m}{\partial t} + \frac{\partial h_m}{\partial \sigma} \frac{\partial \sigma}{\partial x} f(x) + \frac{\partial h_m}{\partial \sigma} \frac{\partial \sigma}{\partial x} g(x) u \\
&+ \gamma(h_m(\sigma,t)) \ge - \delta_{m}\quad\forall m \in \{1,\ldots,M-1\}\\
&\frac{\partial h_M}{\partial \sigma} \frac{\partial \sigma}{\partial x} g(x) u +  \delta_M \\
&\ge \bigg(\!\!-\frac{\partial h_M}{\partial t} - \frac{\partial h_M}{\partial \sigma} \frac{\partial \sigma}{\partial x} f(x)+ \gamma(h_M(\sigma,t))\bigg)\rho(t)\\
&K(t)\delta\ge0,
\end{aligned}
\end{aligned}
\end{equation}
where $l\ge0$, $\rho\colon\R_{\ge0}\to\R_{\ge0}$, and $K\colon\R_{\ge0}\to\R^{N_p\times M}$ is a mapping defined as in Proposition~\ref{prop:switch}.

Let $t_\text{ins}\ge0$ be the time at which task $T_M$ needs to be inserted. If $K$ and $\rho$ are continuous functions, with $\rho\equiv0$ on $(-\infty,t_\text{ins}]$ and $\rho\equiv1$ on $[t_\text{ins}+\Delta t_\text{ins},\infty)$, then the controller $u^*$, solution of~\eqref{eq:qpInsRm}, is a continuous function of time and allows task $T_M$ to be inserted at time $t_\text{ins}+\Delta t_\text{ins}$ in the stack of tasks with any desired priority.
\end{proposition}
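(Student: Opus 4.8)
The plan is to treat \eqref{eq:qpInsRm} exactly as \eqref{eq:qpSwitch} was treated in Proposition~\ref{prop:switch}---as a parametric quadratic program in the decision variables $(u,\delta)$ whose data vary with time through the state trajectory $x(t)$, the CBF terms $h_m$, the prioritization map $K(t)$, and the scheduling signal $\rho(t)$---and then to isolate the two novelties relative to \eqref{eq:qpSwitch}: the $\rho(t)$-modulated constraint for the task $T_M$ being inserted, and the relaxed assumption $l\ge0$ in place of $l>0$. Accordingly I would first show that \eqref{eq:qpInsRm} is feasible for every $t$, then that the $u$-component of its minimizer is continuous in $t$, and finally read off the effect on the task stack from the profile of $\rho$.

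For feasibility I would argue as in Proposition~\ref{prop:switch} and in~\cite{notomista2019optimal}: there is no constraint on $u$ alone, the constraints indexed $m\in\{1,\dots,M-1\}$ have the form (affine in $u$)~$\ge-\delta_m$, and the $T_M$-constraint carries $\delta_M$ on its left-hand side, so---since $\rho$ is bounded---every output constraint can be satisfied by enlarging the corresponding slack, e.g.\ with $u=0$. The only thing to reconcile with this is $K(t)\delta\ge0$: because $K(t)$ encodes a consistent system of pairwise priority inequalities, one can choose the free slacks as a sufficiently large, suitably ordered sequence so that $K(t)\delta\ge0$ holds at the same time. This argument also shows that the feasible set has nonempty relative interior, a Slater-type property I will use below.

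For continuity I note that all constraint coefficients in \eqref{eq:qpInsRm} are continuous in $t$: the terms built from $h_m$, $f$, $g$, $\partial\sigma/\partial x$ are continuous along the absolutely continuous trajectory $x(t)$, $\gamma$ is continuous, and $K$ and $\rho$ are continuous by hypothesis. If $l>0$ the objective is strictly convex in $(u,\delta)$, so feasibility together with continuity of the data lets me invoke Theorem~1 of~\cite{morris2013sufficient}, giving that the unique minimizer, hence $u^*(t)$, is continuous (it would be Lipschitz if $K$ and $\rho$ were, but only continuity is claimed). If $l=0$ the objective is strictly convex in $u$ only; the $u$-part of any minimizer is still unique, since averaging two minimizers with distinct $u$-parts would be feasible with strictly smaller cost, and continuity of $u^*$ then follows from Berge's maximum theorem, using that the feasible-set multifunction is upper semicontinuous (continuous inequality data) and lower semicontinuous (the Slater-type property above). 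Finally, for $t\le t_\text{ins}$ we have $\rho\equiv0$, so the $T_M$-constraint reduces to $\frac{\partial h_M}{\partial\sigma}\frac{\partial\sigma}{\partial x}g(x)u+\delta_M\ge0$, which is dominated by its own slack and imposes nothing on $u$ beyond what the other tasks already do---so $T_M$ is effectively absent from the stack; for $t\ge t_\text{ins}+\Delta t_\text{ins}$ we have $\rho\equiv1$ and the constraint coincides with the standard CBF constraint for $h_M$ appearing in \eqref{eq:qpSwitch}, so by Theorem~\ref{thm:cbf} $T_M$ is executed, with priority relative to the others dictated by $K(t)$, i.e.\ any desired priority; on the transient interval $(t_\text{ins},t_\text{ins}+\Delta t_\text{ins})$ the continuity of $\rho$ interpolates between the two regimes and the continuity of $u^*$ already established rules out jumps at $t_\text{ins}$ and at $t_\text{ins}+\Delta t_\text{ins}$.

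The step I expect to be the main obstacle is the case $l=0$: there the cost fails to be strictly convex in the full decision vector, so Theorem~1 of~\cite{morris2013sufficient} does not apply verbatim and one must instead combine uniqueness of the $u$-component with a constraint-qualification-based continuity result for parametric convex programs---which is exactly why only continuity, and not Lipschitz continuity as in Proposition~\ref{prop:switch}, can be asserted here. A lesser but still fiddly point is the feasibility bookkeeping when $T_M$ is assigned high priority, so that its slack $\delta_M$ must be kept small while still dominating $\big(-\tfrac{\partial h_M}{\partial t}-\tfrac{\partial h_M}{\partial\sigma}\tfrac{\partial\sigma}{\partial x}f(x)+\gamma(h_M)\big)\rho(t)$, which forces the lower-priority slacks to be taken correspondingly large.
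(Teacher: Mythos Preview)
Your argument is correct and, at the high level, parallels the paper's: treat \eqref{eq:qpInsRm} as a parametric convex program whose data vary continuously with $t$ and conclude continuity of the minimizer. The technical routes differ, however. The paper does not split into the cases $l>0$ and $l=0$ nor invoke~\cite{morris2013sufficient} here; instead it argues that for $t\le t_\text{ins}$ the $T_M$-constraint is inactive because $(u,\delta)=(0,0)$ already satisfies it, observes that the right-hand side of that constraint tends to~$0$ as $t\to t_\text{ins}$ by continuity of $\rho$, and then invokes Corollary~3.1 of~\cite{lee2006continuity} to obtain continuity of $u^*$ at the insertion instant, with continuity elsewhere following from continuity of all parameters. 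Your approach buys a cleaner treatment of the $l=0$ edge case---you correctly note that strict convexity in the full decision vector fails there and fall back on Berge's theorem together with uniqueness of the $u$-component---whereas the paper's citation of~\cite{lee2006continuity} sidesteps the strict-convexity requirement of~\cite{morris2013sufficient} without making the case distinction explicit. One small caveat on your interpretation: when $\rho\equiv0$ and $l>0$, the residual constraint $\tfrac{\partial h_M}{\partial\sigma}\tfrac{\partial\sigma}{\partial x}g(x)u+\delta_M\ge0$ together with the cost term $l\delta_M^2$ can still influence $u^*$ slightly, so ``$T_M$ effectively absent'' is a mild overstatement---but this does not affect the continuity claim, which is what the proposition asserts.
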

\begin{proof}
For $t\le t_\text{ins}$, as $\rho(t)\equiv0$, the constraint
\begin{equation}
\label{eq:taskM}
\begin{aligned}
&\frac{\partial h_M}{\partial \sigma} \frac{\partial \sigma}{\partial x} g(x) u +  \delta_M\\
\ge &\bigg(-\frac{\partial h_M}{\partial t} - \frac{\partial h_M}{\partial \sigma} \frac{\partial \sigma}{\partial x} f(x)+ \gamma(h_M(\sigma,t))\bigg)\rho(t)
\end{aligned}
\end{equation}
related to task $T_M$ is not active. In fact, $(u,\delta)=(0,0)$ satisfies \eqref{eq:taskM} and would achieve a (global) minimum of the cost function. For $t>t_\text{ins}$, the constraint \eqref{eq:taskM} is effectively inserted in the optimization problem. As, by hypotheses, $\rho(t)$ is continuous and $\rho(t)=0$ for $t\le t_\text{ins}$, $\rho(t)\to0$ as $t\to t_\text{ins}$. Therefore, the right hand side of \eqref{eq:taskM} also converges to 0 as $t\to t_\text{ins}$. Thus, by Corollary 3.1 in \cite{lee2006continuity}, the solution $u^*(t)$ is continuous at $t=t_\text{ins}$. Continuity for all $t>t_\text{ins}$ stems from the continuity of $\rho$ and of all the other parameters of the optimization problem. For $t\ge t_\text{ins}+\Delta t_\text{ins}$, the condition $\rho\equiv~1$ allows task $T_M$ to be executed together with the other tasks with priorities encoded by the prioritization matrix $K(t)$. Hence, the controller $u^*(t)$, solution of \eqref{eq:qpInsRm}, is a continuous velocity controller for the robot, that is able to insert task $T_M$ into the stack of tasks with any desired priority.
\end{proof}

\begin{remark}
A result analogous to Proposition~\ref{eq:qpInsRm} can be stated to show the continuity of the robot controller when a task is removed from the stack of prioritized tasks, rather than inserted. If task $T_M$ is to be removed at the time $t_\text{rem}$, $\rho(t)$, besides being continuous, has to go to 0 as $t\to t_\text{rem}$.
\end{remark}

In the next section, we present the results of a simulated experiment performed using a manipulator arm. The experiment includes insertion of safety- as well as non-safety-critical tasks in a time-varying prioritized stack of tasks.

\section{EXPERIMENTS}
\label{sec:experiments}
\begin{figure}
	\centering
	\includegraphics[width=0.35\textwidth]{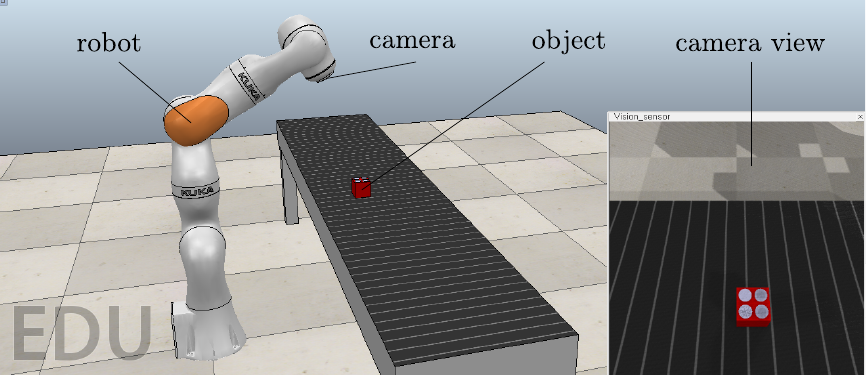}
	\caption{Experimental setup: a 7-DoF manipulator performing multiple tasks in an simulated industrial environment~\cite{Freese2013}.
	A camera mounted on the end-effector 
	is used to perform visual servoing tasks.}
	\label{fig:exp_setup}
	\vspace{-10pt}
\end{figure}

The experimental setup is shown in Fig.~\ref{fig:exp_setup}. 
A 7-DoF arm is employed to perform joint control, position control of the robot end-effector, as well as visual servoing tasks. These include both safety- and non-safety-critical tasks, as described in detail in the following.
Therefore, the considered task spaces are the joint space, the Cartesian space, and the projective plane.

To this end, let $q \in \R^7$ denote the vector of the robot generalized coordinates, $p \in \R^3$ the end-effector Cartesian position, and $s \in \R^2$ the  coordinates of a feature on the image plane. 
Following the formulation in Example~\ref{exmp:partI}, these tasks can be encoded through the following CBFs: $h_T(z) = -\gamma_T\|z-z_d\|^2$.
In the expression of $h_T(z)$, $\gamma_T>0$, and $z$ is a placeholder for $q$ in the joint space task ($T_q$), $p$ in the Cartesian space task ($T_p$), and  $s$ in the visual servoing task ($T_s$). 
The value of $z_d$ denotes the desired reference value for $z$. 
Additionally, one safety-critical task is considered, namely keeping the joint variables within an upper and a lower bound.
The corresponding CBF used in this case for each joint $i$ is given by
\begin{equation}
\label{eq:cbf_joints}
    h_{T,i}\left(z_i\right) = \gamma_T \left(z^+_i - z_i\right)\left(z_i - z^-_i\right),
\end{equation}
where $z^+_i$ and $z^-_i$ denote the $i$-th upper and lower  limit, respectively, of the variable $z_i$. 
In the joint space, these bounds represent physical joint limits.
Notice how keeping the value of $z_i\in[z^-_i,z^+_i]$ corresponds to $h_T(z)\ge0$, which can be enforced using the constraint-based formulation in~\eqref{eq:qpInsRm}.

\begin{figure}[t]
	\centering
	\includegraphics[width = 0.95\linewidth]{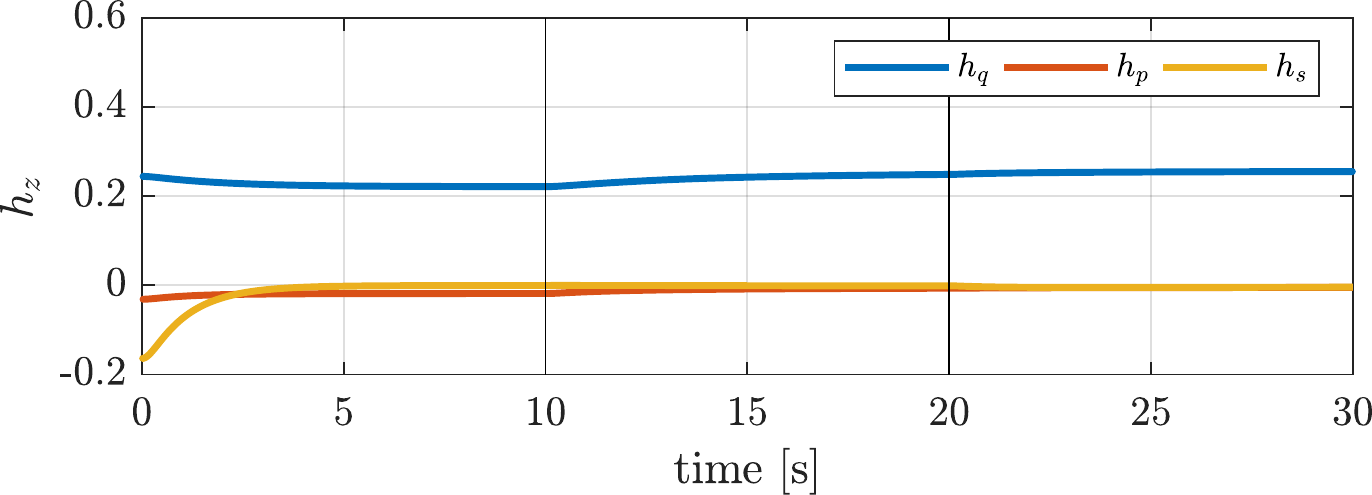}
	\includegraphics[width = 0.95\linewidth]{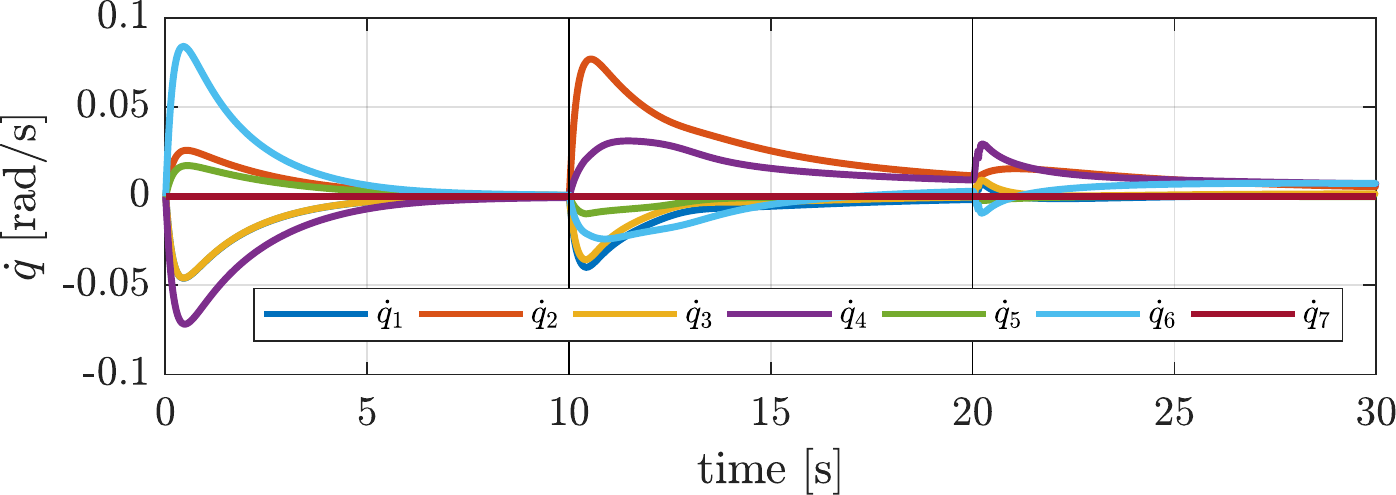}
	\caption{Graphs of the values of the CBFs of the 3 considered tasks (top), and of the joint velocity controller (bottom). 
	The vertical lines at $t=10$ s correspond to insertion of the position task, whereas the ones at $t=20$ s indicate the switch of priorities between the position control and the visual servoing tasks. In the latter case, it can be observed how the presented task prioritization framework is suitable to switch task priorities even when tasks have not been completely accomplished yet.}
	\label{fig:experimental_data}
	\vspace{-10pt}
\end{figure}

The results of a pilot experiment are shown in Fig.~\ref{fig:experimental_data}. 
At the top, the values of the CBFs corresponding to joint control ($h_q$), position control ($h_p$) and visual servoing ($h_s$) tasks are reported.
The objective is that of driving their values to 0 and/or keep them non-negative. 
As can be seen, the value of $h_q$ is always kept positive, corresponding to the condition in which joint variables do not go beyond their limits. 
The values of $h_p$ and $h_s$, instead, are driven to 0 in order to execute the prioritized tasks. 
At the bottom of Fig.~\ref{fig:experimental_data}, the joint velocity controller is plotted, showing the continuity property guaranteed by the optimization problem~\eqref{eq:qpInsRm}. 
The prioritization of the tasks during the course of the experiment is as follows. 
The safety-critical task $T_q$ always has highest priority. 
At $t=0$ s, $T_v$ is inserted: at this point, the stack of tasks is $T_q \prec T_v$. 
Then, at $t=10$ s, $T_p$ is inserted at the lowest priority level, so that the stack of tasks becomes $T_q \prec T_v \prec T_p$. 
Finally, at $t=20$ s, $T_v$ and $T_p$ are swapped, making the stack of tasks $T_q \prec T_p \prec T_v$.

Thus, the extended-based task prioritization framework presented in this paper guarantees the continuity of the robot velocity controller when applied to task insertion and priority switching scenarios. If the parameters of the optimization problem \eqref{eq:qpInsRm} (i.e., $\rho$ and $K$) vary smoothly, it is possible to guarantee the smoothness of the velocity controller. This can be used to ensure continuity of the torque controls.

\section{CONCLUSIONS}

In this paper, we presented an optimization-based framework to execute and prioritize multiple robotic tasks. We extended the definition of set-based tasks and proposed a method to execute them, which leverages control barrier functions. Time-varying priorities, as well as task insertion and removal, were handled by continuously changing the optimization problem required to synthesize the robot controller. Guarantees on continuity were provided under mild assumptions on the parameters of the optimization problem. Although the presented framework can be applied to robotic systems of different nature, its effectiveness was demonstrated through a simulated experiment involving a manipulator arm robot.

\addtolength{\textheight}{-2.5cm}

\newpage

\bibliographystyle{IEEEtran}
\bibliography{bib/nsp_cbf.bib}

\end{document}